
\documentclass[letterpaper, 10 pt, conference]{ieeeconf}  

\IEEEoverridecommandlockouts                              

\overrideIEEEmargins                                      

\usepackage{bm}
\usepackage{nicefrac}
\usepackage{amssymb}
\usepackage{amsmath}

\usepackage{amsthm}
\usepackage{cite}
\usepackage{xcolor}
\usepackage{graphicx}
\usepackage{url}
\usepackage{svg}
\usepackage{balance}
\usepackage{multirow}
\usepackage{subfigure}
\usepackage{hyperref}
\hypersetup{
    colorlinks=true,
    linkcolor=black,
    filecolor=magenta,      
    urlcolor=cyan,
    pdfpagemode=FullScreen,
    }
\newtheoremstyle{exampstyle}
  {3pt} 
  {3pt} 
  {\itshape} 
  {} 
  {\bfseries} 
  {.} 
  {.5em} 
  {} 

\theoremstyle{exampstyle} 
\newtheorem{definition}{Definition}

\newtheorem{theorem}{Theorem}

\newtheorem{assumption}{Assumption}
\newtheorem{problem}{Problem}
\newtheorem{example}{Example}

\theoremstyle{plain}

\definecolor{mumred}{RGB}{222,33,77}
\definecolor{mumgreen}{RGB}{0, 140, 0}
\definecolor{mumblue}{RGB}{0, 100, 222}
\definecolor{mumpurple}{RGB}{128, 0, 128}

\newcommand{\var}{$\mathrm{VaR}$ }
\newcommand{\cvar}{$\mathrm{CVaR}$}
\newcommand{\cvarbar}{$\overline{\mathrm{CVaR}}$ }
\newcommand{\ofx}{\left(\bm{x}\right)}
\newcommand{\ofb}{\left(\bm{b}\right)}
\DeclareMathOperator*{\argmin}{arg\,min}

\title{\LARGE \bf
  Risk-aware Control for Robots with Non-Gaussian Belief Spaces
}

\author{Matti Vahs and Jana Tumova
	\thanks{This work was partially supported by the Wallenberg AI, Autonomous
		Systems and Software Program (WASP) funded by the Knut and Alice
		Wallenberg Foundation. This research has been carried out as part of the Vinnova Competence Center for Trustworthy Edge Computing Systems and Applications at KTH Royal Institute of Technology. Our experiments were carried out in the WASP Research Arena
(WARA) - Robotics, hosted by ABB Corporate Research
Center in V\"aster\r as, Sweden.
  }
	\thanks{The authors are with the Division of Robotics, Perception and Learning, KTH Royal Institute of Technology, Stockholm, Sweden and also affiliated with Digital Futures. Mail addresses: {\{\tt\small vahs, tumova\}}
		{\tt\small @kth.se}}%
}

\begin{document}
	\maketitle
	\thispagestyle{empty}
	\pagestyle{empty}

	\begin{abstract}
        
		This paper addresses the problem of safety-critical control of autonomous robots, considering the ubiquitous uncertainties arising from unmodeled dynamics and noisy sensors. To take into account these uncertainties,  probabilistic state estimators are often deployed to obtain a belief over possible states. Namely, Particle Filters (PFs) can handle arbitrary non-Gaussian distributions in the robot's state. 
  In this work, we define the belief state and belief dynamics for continuous-discrete PFs and construct safe sets in the underlying belief space. We design a controller that provably keeps the robot's belief state within this safe set. As a result, we ensure that the risk of the unknown robot's state violating a safety specification, such as avoiding a dangerous area, is bounded. 
  We provide an open-source implementation as a ROS2 package and evaluate the solution in simulations and hardware experiments involving high-dimensional belief spaces. 

	\end{abstract}

\section{INTRODUCTION}
Autonomous robots are often exposed to various uncertainties in real-world applications such as unmodeled dynamics, noisy sensor readings or partially known environments. Ultimately, these uncertainties prevent us from accurately knowing the state of a robot. This requires us to consider probabilistic state estimators to obtain a robot's \emph{belief}, i.e. a probability distribution over possible states \cite{thrun2005probabilistic}.

Oftentimes, a Kalman Filter (KF) and its variations are deployed in practice. These methods  assume that the underlying probability distribution is Gaussian. Although they are reliable in many scenarios, KFs might fail in the case of highly nonlinear systems or multi-modal distributions. Particle filters (PFs) \cite{dellaert1999monte}, on the other hand, can represent an arbitrary non-Gaussian belief 
as a discrete set of samples. Consider for example the robot depicted in Fig.~\ref{fig:Intro} that is using a 2D LiDAR for localization in a given map. Due to nonlinear dynamics and observation models, the belief is non-Gaussian as illustrated by the red samples of the PF. 

Guaranteeing safety, such as avoiding dangerous areas in the map that are difficult to detect by local sensing (untraversable terrain, glass walls, ...) is challenging. This is because it requires us to consider the uncertainty in the robot's state due to the fact that we do not know exactly where the robot is located. Existing works for safety-critical control under stochastic uncertainties typically assume Gaussian beliefs \cite{blackmore2006probabilistic, van2012motion, 8613928, ono2008iterative}, bounded estimation errors \cite{Zhang_2022, Wang_2021, dean_20}, or only consider external uncertainty but also assume deterministic dynamics \cite{de2021scenario, de2023scenario}. Applying these methods to the described scenario might not yield desired safety. 

Providing rigorous safety guarantees for general nonlinear systems under stochastic uncertainties is challenging for several reasons: 1) The true underlying distribution over states is unknown as the standard Bayes Filter is intractable for continuous nonlinear systems, 2) the curse of dimensionality in belief spaces complicates the design of computationally efficient solutions and 3) hard safety constraints on the state cannot be ensured due to unbounded probability distributions. The latter point requires us to consider safety specifications in a risk-aware sense, meaning that small violations are allowed where the admissible level of risk is ultimately defined by the user. Measures of risk enable us to reason about specification violation in the presence of uncertainty \cite{nyberg2021risk, barbosa2021risk, majumdar2020should}. Especially measures of risk such as \emph{Conditional-Value-at-Risk} (\cvar) have gained a lot of attention in the robotics community as these account for less likely tail events that could result in unsafe robot behavior.

\begin{figure}[t]
    \centering
    \includegraphics[scale=0.82]{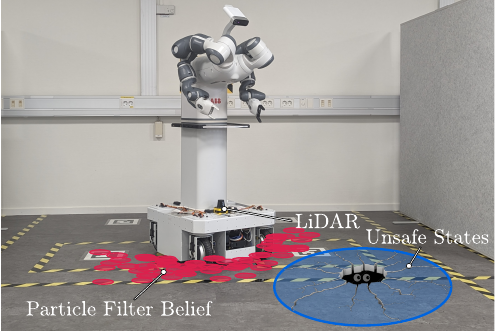}
    \caption{An illustration of our experiments with the ABB Mobile YuMi Research Platform operating in an uncertain environment. The robot is equipped with a LiDAR sensor used for localization. Due to uncertainties in motions and observations, we only have access to a belief which is provided by a PF shown as red dots. The robot is supposed to avoid a dangerous area that cannot be detected by local sensing such as e.g. an untraversable area.}
    \vspace{-0.65cm}
    \label{fig:Intro}
\end{figure}

To address the three above mentioned challenges and offer safety guarantees in real-world robotic scenarios, we propose a solution enhancing Control Barrier Functions (CBFs) to non-Gaussian belief spaces. Namely, we 
\begin{enumerate}
    \item enable guaranteed risk-awareness by using CVaR for arbitrary non-Gaussian beliefs described by PFs,
    \item provide computationally efficient control synthesis under high dimenionalities of  belief spaces,
    \item offer an open-source ROS2 package\footnote{available at \href{https://github.com/mattivahs/pfcbf\_ros2}{https://github.com/mattivahs/pfcbf\_ros2}} that can be easily integrated with existing navigation stacks.
\end{enumerate}

\section{Related Work}
There is a large body of literature on safety-critical control under bounded uncertainties. These works are based on tools from set-based reachibility analysis \cite{althoff2014online, pek2020using}, robust model predictive control (MPC) \cite{campo1987robust, langson2004robust, roque2022corridor, zeilinger2009real} or robust CBFs \cite{xu2015robustness, kolathaya2018input, Zhang_2022, Wang_2021}. While these methods have proven effective in many scenarios, it is important to consider that most estimators deployed on robots consider disturbances of stochastic nature which are typically unbounded. 

Stochastic disturbances on the system dynamics have been addressed in \cite{schwarm1999chance, clark, cosner2023robust, black2023safety, li2002probabilistically, oldewurtel2013stochastic, singletary2022safe,lew2023risk} which are based on MPC and CBF approaches. Stochastic CBFs (SCBFs) ensure safety with probability one for systems that are described by stochastic differential equations (SDEs) \cite{clark}. 
In \cite{lew2023risk}, risk measures such as \cvar~are leveraged for risk-aware trajectory optimization under stochastic dynamics. Their approach is based on sample-average approximations of risk constraints which is real-time applicable if the number of samples is low. Although \cite{schwarm1999chance, clark, cosner2023robust, black2023safety, li2002probabilistically, oldewurtel2013stochastic, singletary2022safe,lew2023risk} consider the stochastic evolution of the system dynamics, they assume precise knowledge of the state, thus neglecting the uncertainties in sensor observations.  

Approaches based on a belief instead of a state have been proposed in \cite{blackmore2006probabilistic, van2012motion, 8613928, ono2008iterative, vahs2023belief, ostafew2016robust}. Chance-constrained nonlinear MPC (CCNMPC) \cite{8613928} enforces obstacle avoidance with a desired confidence in which the uncertainty in the state estimate originates from an unscented KF. In \cite{vahs2023belief}, the authors focused on providing safety guarantees through risk-aware control problem for systems which use an extended KF as state estimator. However, all of these methods assume that the belief follows a Gaussian distribution which might result in unsafe behaviors in scenarios in which a Gaussian is a poor approximation of the true distribution. 


Only few works consider non-Gaussian beliefs for safety-critical control. In the seminal work \cite{blackmore2010probabilistic}, the authors propose a chance constrained planning framework for obstacle avoidance that leverages PFs for belief propagation and approximates chance constraints through sample averages. In \cite{platt2017efficient}, samples from a PF are used in planning to disambiguate a multiple potential hypotheses but safety constraints are not considered. Online POMDP solvers proposed in \cite{sunberg2018online} combine PFs and monte carlo tree search (MCTS) for planning under uncertainty which allows to maximize a reward function but does not include rigorous safety constraints. Additionally, to make \cite{blackmore2010probabilistic, platt2017efficient, sunberg2018online} computationally tractable, only a subset of particles from the actual PF is used. Consequently, there can be a mismatch between the sample-based distribution and the true distribution which can lead to unsafe behavior at deployment.

In this paper, we consider the problem of risk-aware control where the only thing we have access to is a PF belief. In contrast to other existing works, we consider the entire PF belief which could consist of thousands of particles and still offer a computationally effective solution. Furthermore, we account for the mismatch between the sample-based distribution and the true underlying continuous distribution by using an underapproximation of \cvar.







\section{Preliminaries}
Consider a robot that is modeled by a stochastic differential equation (SDE) of the form
\begin{align}
    \mathrm{d}\bm{x} &= \left(\bm{f}\left(\bm{x}\right) + \bm{g}\left(\bm{x}\right)\bm{u}\right) \mathrm{d}t + \bm{\sigma}\ofx ~\mathrm{d}\bm{W}\label{eq:SDE}
\end{align}
where $\bm{x} \in \mathcal{X} \subseteq \mathbb{R}^{n_x}$ is the state, $\bm{u} \in \mathcal{U} \subseteq \mathbb{R}^m$ is the control input and $\bm{W} \in \mathbb{R}^q$ is $q\text{-dimensional}$ Brownian motion. We assume that the diffusion term $\bm{\sigma}\ofx$ is a globally Lipschitz non-degenerate diagonal matrix and the drift term $\left(\bm{f}\left(\bm{x}\right) + \bm{g}\left(\bm{x}\right)\bm{u}\right)$ is a locally Lipschitz function that can have jumps. 
These assumptions ensure that Eq.~\eqref{eq:SDE} has a unique global strong solution \cite{leobacher2017strong}.

\subsection{Continuous-discrete Particle Filters}
In practice the state of a robot cannot be measured precisely as measurements are corrupted by noise. We assume that the robot's observations can be modeled by 
\begin{align}
    \bm{z}_{t_k} &= \bm{\ell}\left(\bm{x}_{t_k}, \bm{v}_{t_k}\right)\label{eq:observation}
\end{align}
with measurement noise $\bm{v}_{t_k} \sim p(\bm{v})$. Measurements only occur at discrete timesteps $t_1,\ldots, t_k$ due to limited update rates of sensors.
 Thus, the exact state at time $t$ is unknown, but a Bayesian posterior $p\left(\bm{x}_t \mid \bm{z}_{t_1:t_k}\right)$, for $t_k \leq t < t_{k+1}$  describes the probability distribution over states conditioned on past measurements. Unfortunately, this Bayesian posterior does not admit closed form solutions except for specialized cases such linear Gaussian systems. PF techniques stem from the idea of approximating the Bayesian posterior by means of a finite set of $N$ weighted samples,  i.e. particles, $\{(\bm{x}_t^{(i)}, w_t^{(i)})\}_{i=1}^N$, where $w_t^{(i)} \in \mathbb{R}_{\geq 0}$. The PF is a non-parametric approach that approximates the true posterior as
 \begin{align*}
    p\left(\bm{x}_t\right) \approx \sum_{i=1}^N w_t^{(i)} \delta\left(\bm{x}_t - \bm{x}_t^{(i)}\right)
\end{align*}
where $\delta(\cdot)$ denotes the Dirac delta function. It is known that as $N \rightarrow \infty$, the approximated posterior converges to the true posterior \cite{984773}. 

Since we are considering a continuous time SDE as motion model and a discrete-time observation model, we distinguish between two different evolutions of the posterior: 1) the posterior evolves in continuous time in periods $t \in [t_{k-1}, t_k)$ between measurements and 2) a discrete-time Bayesian update step at times $t=t_k$. In the former period, the evolution of the belief is obtained by propagating all particles through the SDE in Eq.~\eqref{eq:SDE}. In the update step, each particle $\bm{x}_{t_k}^{(i)}$ is weighted and resampled according to the observation likelihood $p(\bm{z}_{t_k}\mid \bm{x}^{(i)}_{t_k})$, see, e.g., \cite{nielsen2022state} for more details on continuous-discrete PFs.
\subsection{Risk Measures}
Given a scalar random variable $y \in \mathcal{Y}$ with $p\left(y\right)$ denoting its probability density function (pdf), a risk measure is a function that assigns a real value to the random variable, i.e., $\mathcal{R}: \mathcal{Y} \mapsto \mathbb{R}$.
To evaluate tail events of a random variable, we consider the following risk measures.
\begin{definition}
\label{def:cvar}
The Value-at-Risk ($\mathrm{VaR}$) of a random variable $y \in \mathbb{R}$ with pdf $p\left(y\right)$ at level $\alpha \in (0, 1]$ is given by
\begin{align*}
    \mathrm{VaR}_{\alpha}\left(y\right) &= \underset{\tau \in \mathbb{R}}{\mathrm{inf}} \left\{\tau \mid \mathrm{Pr} \left[y \leq \tau\right] \geq \alpha\right\}.
\end{align*}
    The Conditional-Value-at-Risk (\cvar) of a random variable $y \in \mathbb{R}$ with pdf $p\left(y\right)$ at level $\alpha$ is the expected value of the $\alpha$ quantile, i.e.
    \begin{equation*}
        \mathrm{CVaR}_{\alpha}(y) = \mathbb{E} \left\{y \mid y \leq \mathrm{VaR}_{\alpha}(y)\right\}.
    \end{equation*}
\end{definition}
\var provides a measure that is qualitatively equivalent to chance constraints, i.e.~$\mathrm{VaR}_{\delta}(y) \geq 0 \Leftrightarrow \mathrm{Pr}\left[y \leq 0\right] \leq \delta $.
\cvar, on the other hand, provides a risk measure that considers the mean of the tail of a distribution, which is more risk-averse than $\mathrm{VaR}$. In this work, we choose to use \cvar~as the risk measure suitable for safety-critical systems.
\subsection{Stochastic Control Barrier Functions}
Consider the SDE defined in Eq.~\eqref{eq:SDE} and a safe set 
\begin{equation}
\begin{aligned}
  \label{eq:safeset_state}
    \mathcal{C}_x &= \left\{\bm{x} \in \mathcal{X} \mid h_x\left(\bm{x}\right) \geq 0\right\},\\
    \partial \mathcal{C}_x &= \left\{\bm{x} \in \mathcal{X} \mid h_x\left(\bm{x}\right) = 0\right\}.
 \end{aligned}
\end{equation} 
in which we want the state to remain.  
\begin{definition}
A safe set $\mathcal{C}_x$ is forward invariant with respect to the system \eqref{eq:SDE} if for every initial condition $\bm{x}_{t_0} \in \mathcal{C}_x$
it holds that $\bm{x}_t \in \mathcal{C}_x, \forall t \geq t_0$.
\end{definition}
Stochastic CBFs have been proposed in \cite{clark} which render the safe set $\mathcal{C}_x$ forward invariant assuming that the function $h_x\ofx$ is twice continuously differentiable everywhere. These invariance results can be extended to consider safe sets that are described by non-smooth functions $h_x\ofx$.
    \begin{definition}
    \label{def:NSCBF}
        Given a safe set $\mathcal{C}_x$ defined by \eqref{eq:safeset_state}, the function $B_x\ofx$ serves as a non-smooth stochastic CBF (NSCBF) for the system \eqref{eq:SDE}, if $\forall \bm{x} \in \mathcal{C}_x$
        \begin{enumerate}
            \item There exist class-K functions $\alpha_1$ and $\alpha_2$ such that
            \begin{align}
                \label{eq:SCBF1}
                \frac{1}{\alpha_1\left(h_x\left(\bm{x}\right)\right)} \leq \frac{1}{B_x\ofx} \leq \frac{1}{\alpha_2\left(h_x\left(\bm{x}\right)\right)}.
            \end{align}
            \item There exists a class-K function $\alpha_3$ and a control input $\bm{u} \in \mathcal{U}$ such that
            \begin{align*}
            \frac{\partial B_x}{\partial \bm{x}} \left(\bm{f}\ofx + \bm{g} \ofx \bm{u}\right) + \frac{1}{2} \mathrm{tr}\left[\bm{\sigma}^T \frac{\partial^2 B_x}{\partial \bm{x}^2} \bm{\sigma}\right]\\
            \leq \alpha_3(h_x\ofx).
        \end{align*}
        \end{enumerate}
    \end{definition}
    
    \begin{theorem}
        If a locally Lipschitz control input $\bm{u}(t)$ satisfies Def.~\ref{def:NSCBF} for a given safe set, then $\mathrm{Pr}\left[\bm{x}(t) \in \mathcal{C}_x, \hspace{0.2cm} \forall t\geq t_0\right]=1$, provided that $\bm{x}(t_0) \in \mathcal{C}_x$.\label{thm:NSCBF}
    \end{theorem}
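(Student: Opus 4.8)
The plan is to reduce the statement to the known smooth stochastic CBF result of \cite{clark} by means of a generalized It\^o formula, with the non-degeneracy of $\bm{\sigma}$ used to control the set where $h_x$ fails to be $C^2$. Take $t_0=0$ without loss of generality and write $\mathcal{A}$ for the generator of \eqref{eq:SDE}, $\mathcal{A}\phi\ofx=\frac{\partial\phi}{\partial\bm{x}}\big(\bm{f}\ofx+\bm{g}\ofx\bm{u}\big)+\tfrac12\mathrm{tr}\big[\bm{\sigma}^{T}\frac{\partial^{2}\phi}{\partial\bm{x}^{2}}\bm{\sigma}\big]$, so that Definition~\ref{def:NSCBF} is precisely $\mathcal{A}h_x\ofx\ge -h_x\ofx$ at every $\bm{x}\in\mathcal{C}_x$ where $h_x$ is twice differentiable. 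When $h_x\in C^{2}$ this is the hypothesis under which \cite{clark} renders $\mathcal{C}_x$ forward invariant with probability one: by It\^o, $Y_t:=e^{t}h_x(\bm{x}_t)$ has drift $e^{t}\big(h_x+\mathcal{A}h_x\big)(\bm{x}_t)\ge 0$, so that $Y_t$ stopped at the first exit time of $\mathcal{C}_x$ and suitably localized is a submartingale started at $h_x(\bm{x}_0)\ge 0$, from which probability-one invariance follows. Hence the only thing I need to supply is a version of It\^o's formula valid for the merely locally Lipschitz $h_x$.

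This is where non-degeneracy enters. Since $\bm{\sigma}$ is non-degenerate (hence locally uniformly elliptic) and the drift has at most linear growth, the strong solution of \eqref{eq:SDE} is non-explosive and admits a transition density with respect to Lebesgue measure for every $t>0$; consequently, for any Lebesgue-null Borel set $\mathcal{N}\subseteq\mathcal{X}$ one has $\mathbb{E}\big[\int_{0}^{T}\mathbf{1}\{\bm{x}_s\in\mathcal{N}\}\,\mathrm{d}s\big]=0$, so the path spends zero time in $\mathcal{N}$ almost surely. I would take $\mathcal{N}$ to be the null set on which $h_x$ is not $C^{2}$ and then mollify: set $h^{\varepsilon}_x:=h_x*\rho_\varepsilon$ for a smooth compactly supported kernel $\rho_\varepsilon$, apply the classical It\^o formula to $h^{\varepsilon}_x(\bm{x}_t)$, and let $\varepsilon\to 0$. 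Local Lipschitzness gives uniform convergence of $h^{\varepsilon}_x$ and bounded a.e.\ convergence of $\nabla h^{\varepsilon}_x$ on compacts, and the zero-occupation-time fact disposes of the second-order term, so that dominated convergence yields, up to each localizing time, the identity $h_x(\bm{x}_t)=h_x(\bm{x}_0)+\int_{0}^{t}\mathcal{A}h_x(\bm{x}_s)\,\mathrm{d}s+\int_{0}^{t}\frac{\partial h_x}{\partial\bm{x}}(\bm{x}_s)\bm{\sigma}(\bm{x}_s)\,\mathrm{d}\bm{W}_s$, with $\mathcal{A}h_x$ evaluated through the a.e.\ Hessian and satisfying $\mathcal{A}h_x+h_x\ge 0$ along the path. (For the concrete case $h_x=\min_i h_i$ with $h_i\in C^{2}$ one can alternatively invoke the It\^o--Tanaka / Peskir change-of-variable formula directly; the point to check is then that the local-time term produced at the kinks has the sign that does not spoil $\mathcal{A}h_x+h_x\ge 0$ below, which for min-type kinks it does.)

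Given this identity, the remainder is the smooth argument transferred verbatim. Let $\tau:=\inf\{t\ge 0:\bm{x}_t\notin\mathcal{C}_x\}$ and $\tau_R:=\inf\{t\ge 0:\|\bm{x}_t\|\ge R\}$; by continuity of $t\mapsto h_x(\bm{x}_t)$ we have $h_x(\bm{x}_\tau)=0$ on $\{\tau<\infty\}$ and $h_x(\bm{x}_t)\ge 0$ for $t<\tau$. Applying the product rule to $e^{t}h_x(\bm{x}_t)$ on $[0,\tau\wedge\tau_R]$ and using $\mathcal{A}h_x+h_x\ge 0$ on $\mathcal{C}_x$ makes $e^{t\wedge\tau\wedge\tau_R}h_x(\bm{x}_{t\wedge\tau\wedge\tau_R})$ a submartingale (the stochastic integral is a true martingale after the localization) with initial value $h_x(\bm{x}_0)\ge 0$, from which $\mathrm{Pr}[\tau\le T]=0$ for every $T$ follows exactly as in \cite{clark}. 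Letting $R\to\infty$ (non-explosion) and taking a union over $T\in\mathbb{N}$ gives $\mathrm{Pr}[\tau=\infty]=1$, which is the claim. Local Lipschitzness of $\bm{u}_t$ is used only so that \eqref{eq:SDE} has a unique strong solution along which $\mathcal{A}h_x$ is well defined.

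The step I expect to be the main obstacle is the middle one: rigorously establishing the generalized It\^o formula for the non-smooth $h_x$ and, in particular, showing that the contribution of the non-differentiability set either vanishes — because non-degeneracy of $\bm{\sigma}$ makes that set negligible for the occupation measure — or enters on the side compatible with $\mathcal{A}h_x+h_x\ge 0$. Everything after that is the already-known smooth stochastic CBF argument.
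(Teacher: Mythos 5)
The paper itself does not prove Theorem~\ref{thm:NSCBF}; it defers entirely to \cite{vahs2023nonsmooth}, so there is no in-paper argument to compare against. Judged on its own terms, your proposal has a genuine gap at its final, load-bearing step. From the drift inequality $\mathcal{A}h_x + h_x \ge 0$ you correctly obtain that $Y_t := e^{t\wedge\tau\wedge\tau_R}\,h_x(\bm{x}_{t\wedge\tau\wedge\tau_R})$ is a (local) submartingale with $Y_0 = h_x(\bm{x}_{t_0})\ge 0$. But a submartingale with nonnegative initial value need not remain nonnegative: the submartingale property only yields $\mathbb{E}[Y_t]\ge Y_0\ge 0$, which says nothing about $\Pr[\tau\le T]$. (Standard Brownian motion started at $1$ with $h_x(x)=x$ satisfies the inequality of Definition~\ref{def:NSCBF} on $\{x\ge 0\}$, yet hits $0$ almost surely.) This is exactly the step that cannot be waved through ``exactly as in \cite{clark}''; deriving probability-one invariance from a zeroing-type condition $\mathcal{A}h_x \ge -h_x$ requires an additional mechanism --- e.g.\ a barrier that blows up at $\partial\mathcal{C}_x$ (a reciprocal or logarithmic transform) combined with a nonnegative-supermartingale maximal inequality, or a boundary degeneracy condition such as $\bm{\sigma}^T\nabla h_x=0$ on $\partial\mathcal{C}_x$ --- and none of these appears in your argument. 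You must either supply such a mechanism or reproduce the actual argument of \cite{vahs2023nonsmooth}.

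A secondary problem sits in your generalized It\^o step. Zero occupation time of the non-$C^2$ set does not dispose of the singular second-order contribution: local time is supported precisely on such Lebesgue-null sets (Tanaka's formula is the canonical example), so the mollification limit can retain a singular term even though the path spends zero time at the kinks. Moreover, for $h_x=\min_i h_i$ the kink is concave, so the distributional Hessian carries a \emph{negative} singular part and the It\^o--Tanaka correction is a \emph{subtracted} increasing process; this is the unfavorable sign for the inequality $\mathcal{A}h_x+h_x\ge 0$, contrary to your parenthetical claim (for max-type, i.e.\ convex, kinks the sign would indeed help). So even granting the submartingale route, the nonsmooth reduction as you have set it up would fail for exactly the min-of-smooth-functions safe sets that motivate the nonsmooth extension.
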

    \begin{proof} The proof can be found in~\cite{vahs2023nonsmooth}.
    \end{proof}
\section{Problem Setting}
We consider robotic systems described by continuous time SDEs in Eq.~\eqref{eq:SDE} and discrete stochastic observation models in Eq.~\eqref{eq:observation}. Ideally, the state of the robot should remain in a safe set $\mathcal{C}_x$ at all times. However, the state is unknown and we only have access to the belief, i.e. an unbounded probability distribution over states. Due to the unboundedness of the belief, there is always the possibility that the state might be outside the safe set, hence we need to formulate safety in terms of risk-awareness.
\begin{figure}[t]
    \centering
    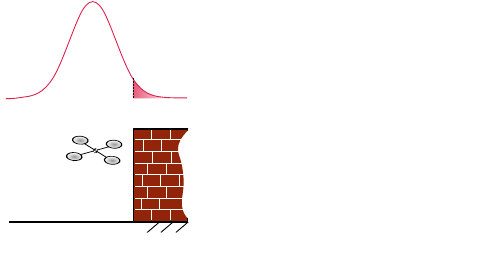
    \caption{\textbf{Left:} A drone with uncertain position $x$ and its pdf $p(x)$ is moving in one dimension. A safety specification is defined as not colliding with the wall, i.e. $h_x = 2 - x$. The true pdf is a Gaussian shown in red and the PF is shown in blue. \textbf{Right: }The distributions over $h_x$ are shown on the right with their corresponding risk measures visualized.}
    \vspace{-0.6cm}
    \label{fig:Example}
\end{figure}

\begin{example}
    \label{exmpl:drone}
    Consider a drone operating in one dimension as shown in Fig.~\ref{fig:Example} in the bottom left, where $x$ denotes its position. A safe set is given as $\mathcal{C}_x = \{x \in \mathbb{R} \mid x \leq 2\}$ which is the set of collision free states. The true posterior follows a Gaussian distribution, i.e. $x \sim \mathcal{N}(\mu, \sigma)$ where $\mu$ and $\sigma$ denote the mean and standard deviation, respectively. The true posterior, is generally unknown and we only have access to the belief, which is modeled by a PF (in our example shown by the blue circles). Consequently, we have a distribution over possible values of $h_x(x) = 2 - x$, as shown in Fig.~\ref{fig:Example} in the right. The true distribution is unbounded (shown by the red curve), whereas the resulting distribution based on the PF belief is given by a histogram distribution. We want to ensure that $\mathrm{CVaR}(h_x(x)) \geq 0$ which is our risk-aware objective.
\end{example}
Although the motivating example is linear and Gaussian, we consider general nonlinear motions and observations, nonlinear safe sets as well as non-Gaussian beliefs. 
\begin{problem}
    Given stochastic motion and observations~\eqref{eq:SDE}-\eqref{eq:observation}, a reference control input $\bm{u}_{\text{ref}}$ and a safe set $\mathcal{C}_x$ defined over states, synthesize control inputs that ensure that at any time $\mathrm{CVaR}_{\alpha}(h_x(\bm{x}(t)) \geq 0$. If exact satisfaction cannot be guaranteed, provide probabilistic satisfaction bounds. \label{prob:control}
\end{problem}
A challenge is that we only know the empirical $\widehat{\mathrm{CVaR}}$ based on the PF, which can mismatch the true \cvar~as not enough particles are available, see Fig.~\ref{fig:Example}. Thus, decisions based on $\widehat{\mathrm{CVaR}}$ might result in unsafe behavior in the real world. In the following, we address this challenge.

\section{Risk-Aware Control}
In the following, we present our solution to Problem~\ref{prob:control} which follows three steps: 1) We define the belief state and the corresponding belief dynamics for continuous-discrete PFs, 2) We construct a  safe set in belief space which, if forward invariant, ensures provable risk-aware safety for the unknown state $\bm{x}$, and, 3)  We formulate a controller that renders the proposed safe set forward invariant, i.e. it ensures that the belief state remains in the safe set.
\subsection{Particle Filter Belief Dynamics}
As the PF used for state estimation consists of a weighted particle set $\{(\bm{x}_t^{(i)}, w_t^{(i)})\}_{i=1}^N$, we define a belief state
\begin{align*}
    \bm{b}_t = \begin{bmatrix}\bm{x}_t^{(1)} & \hdots & \bm{x}_t^{(N)}\end{bmatrix}^T \in \mathbb{R}^{N \cdot n_x}.
\end{align*}
which uniquely describes the current belief at time $t$. 
The continuous time dynamics of the belief state $\bm{b}$ during the period where no observations are available is obtained by propagating each particle through the nonlinear SDE in Eq.~\eqref{eq:SDE}, resulting in
\begin{align}
    \mathrm{d}\bm{b}_t 
    &= \left(\begin{bmatrix}
        \bm{f}\left(\bm{x}_t^{(1)}\right) \\
        \vdots\\
        \bm{f}\left(\bm{x}_t^{(N)}\right)  
    \end{bmatrix} + \begin{bmatrix}
        \bm{g}\left(\bm{x}_t^{(1)}\right) \\
        \vdots\\
        \bm{g}\left(\bm{x}_t^{(N)}\right)  
    \end{bmatrix}\bm{u}\right) \mathrm{d}t + \begin{bmatrix}
    \bm{\sigma} ~\mathrm{d}\bm{W}_t^{(1)}\\
    \vdots\\
    \bm{\sigma}~ \mathrm{d}\bm{W}_t^{(N)}
    \end{bmatrix}\nonumber\\
    &:= \left(\bm{f}_b\left(\bm{b}\right) + \bm{g}_b \left(\bm{b}\right) \bm{u}\right) \mathrm{d}t + \bm{\Sigma} ~ \mathrm{d}\tilde{\bm{W}}\label{eq:BeliefDynamics}
\end{align}
where $\bm{\Sigma} = \mathrm{BD}\left(\{\bm{\sigma}\}_{i=1}^N\right)$ is a block diagonal matrix and $\tilde{\bm{W}}$ is a Brownian motion of dimension $N \cdot q$. 
The belief dynamics are of very high dimension due to the curse of dimensionality in belief spaces which complicates the use of common control techniques such as MPC directly in belief space. 
However, it should be noted that the individual state particles are entirely decoupled which means that the SDE in Eq.~\eqref{eq:BeliefDynamics} can be solved efficiently using parallelization. Further, as $\bm{\Sigma}$ is globally Lipschitz and non-degenerate it is ensured that a solution to Eq.~\eqref{eq:BeliefDynamics} exists.

At discrete times $t_k=t_k^+$ where observations are available, the PF belief is conditioned on the measurement which leads to a discrete update of the belief state, i.e. $\bm{b}(t_k^+) = \bm{\Delta}(\bm{b}(t_k^-), \bm{z}_k)$ where $t_k^-$ is infinitesimal smaller than $t_k^+$. 
This discrete map $\bm{\Delta}(\cdot)$, however, cannot be obtained in closed form as the conditioning includes a stochastic resampling step. In this step, $N$ particles are resampled from the prior belief $\bm{b}(t_k^-)$ according to their likelihood weights. For a detailed description of this step, the reader is referred to \cite{thrun2005probabilistic}. 

\subsection{Construction of Safe Sets}
Now that we have a model of how the belief state $\bm{b}$ changes over time, we construct a safe set $\mathcal{C}_b$ that we want $\bm{b}$ to remain in. This safe set should be defined in a way such that $\bm{b}(t) \in \mathcal{C}_b$ implies risk-awareness for the unknown system state, i.e. $\mathrm{CVaR}_{\alpha}(h_x(\bm{x}(t)) \geq 0$.

The main difficulty lies in the fact that the true distribution $p(h_x(\bm{x}))$ is unknown, thus the exact \cvar~cannot be calculated. Yet, we have access to the PF belief $\bm{b}$ which represents a discrete sample set from the unknown true posterior $p\ofx$. Thus, we can propagate all the particles through the function $h_x \ofx$, i.e. $h_x^{(i)} = h_x(\bm{x}^{(i)})$, to obtain a set of samples from the true unknown distribution $p(h_x\ofx)$. We leverage a theorem that allows us to draw conclusions about the \cvar~of an unknown continuous distribution based on a set of samples that are drawn from that distribution. In contrast to the original version, we reformulate it focusing on the lower-tail distribution.
\begin{theorem}[\cite{thomas2019concentration}, Thm. 3]
Let $y^{(1)}, \dots, y^{(N)}$ be i.i.d. samples from a random variable $y$ and $\mathrm{Pr}\left[y \geq b\right] = 1$ for some finite $b$, then for any $\delta \in (0, 0.5]$, 
\begin{align*}
    \mathrm{Pr}\left[\overline{\mathrm{CVaR}_{\alpha}}\left(y^{(1)},\dots,y^{(N)}\right) \leq \mathrm{CVaR}_{\alpha}\left(y\right) \right] \geq 1 -\delta
\end{align*}
where the lower bound is obtained as
\begin{equation}
\begin{aligned}
    \overline{\mathrm{CVaR}_{\alpha}}\left(y^{(1)},\dots,y^{(N)}\right) &=  \xi_{N+1} + \frac{1}{\alpha} \sum_{i=1}^N \Bigg(\left(\xi_{i} - \xi_{i+1}\right) \cdot\\
    &\left[\frac{i}{N} - \sqrt{\frac{\mathrm{ln}\left(\nicefrac{1}{\delta}\right)}{2N}} - \left(1-\alpha\right)\right]^+\Bigg)\label{eq:cvarbar}
\end{aligned}
\end{equation}
with $\xi_1, \dots, \xi_N$ as order statistics (i.e. $y^{(1)}, \dots, y^{(N)}$ in descending order), $\xi_{N+1} = b$ and $(\cdot)^+ = \mathrm{max}\left\{0, \cdot\right\}$.
\label{thm:cvar}
\end{theorem}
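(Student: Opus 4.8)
The plan is to reduce Theorem~\ref{thm:cvar} to a one-sided uniform deviation bound on the empirical CDF; this is the lower-tail mirror of the argument in \cite{thomas2019concentration}, and I would run it directly so the algebra lines up with \eqref{eq:cvarbar}. Write $F_y$ for the CDF of $y$ and $F_y^{-1}(u)=\inf\{t:F_y(t)\ge u\}$ for its lower quantile. The first step is to put the lower-tail $\mathrm{CVaR}$ of Def.~\ref{def:cvar} into ``horizontal'' form: starting from the standard quantile-integral representation $\mathrm{CVaR}_\alpha(y)=\tfrac1\alpha\int_0^\alpha F_y^{-1}(u)\,\mathrm du$ (which agrees with $\mathbb E[y\mid y\le \mathrm{VaR}_\alpha(y)]$ for the continuous distributions of interest), using $y\ge b$ a.s. to write $F_y^{-1}(u)=b+\int_b^\infty \mathbf 1[t<F_y^{-1}(u)]\,\mathrm dt$, the standard inversion relation $F_y^{-1}(u)>t\iff u>F_y(t)$, and Fubini, one obtains
\begin{equation*}
  \mathrm{CVaR}_\alpha(y)=b+\frac1\alpha\int_b^\infty\bigl[\alpha-F_y(t)\bigr]^+\,\mathrm dt ,
\end{equation*}
the integral being finite because $\mathbb E|y|<\infty$.

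\textbf{Step 2 (concentration).} Let $\hat F_N(t)=\tfrac1N\sum_{i=1}^N\mathbf 1[y^{(i)}\le t]$. By the one-sided Dvoretzky--Kiefer--Wolfowitz inequality with Massart's sharp constant, and taking $\varepsilon=\sqrt{\ln(1/\delta)/(2N)}$ (legitimate for $\delta\le\tfrac12$, as in \cite{thomas2019concentration}), we have $\mathrm{Pr}[\sup_t(F_y(t)-\hat F_N(t))>\varepsilon]\le e^{-2N\varepsilon^2}=\delta$. On the complementary event, of probability at least $1-\delta$, we have $F_y(t)\le\hat F_N(t)+\varepsilon$ for all $t$, hence $[\alpha-F_y(t)]^+\ge[\alpha-\hat F_N(t)-\varepsilon]^+$ pointwise; substituting into the representation of Step~1 gives, on that event,
\begin{equation*}
  \mathrm{CVaR}_\alpha(y)\ge b+\frac1\alpha\int_b^\infty\bigl[\alpha-\hat F_N(t)-\varepsilon\bigr]^+\,\mathrm dt .
\end{equation*}

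\textbf{Step 3 (closed form).} Finally I would evaluate the last integral explicitly. With the descending order statistics $\xi_1\ge\cdots\ge\xi_N$ and $\xi_{N+1}=b$, the empirical CDF is the step function $\hat F_N(t)=(N-i)/N$ on $t\in[\xi_{i+1},\xi_i)$ (ties produce empty, hence harmless, intervals, and $t\ge\xi_1$ contributes nothing since $\alpha\le1$), so
\begin{equation*}
  \int_b^\infty\bigl[\alpha-\hat F_N(t)-\varepsilon\bigr]^+\,\mathrm dt=\sum_{i=1}^N(\xi_i-\xi_{i+1})\Bigl[\tfrac iN-\varepsilon-(1-\alpha)\Bigr]^+ .
\end{equation*}
Inserting $\varepsilon=\sqrt{\ln(1/\delta)/(2N)}$ and dividing by $\alpha$ turns the right-hand side of the previous display into exactly $\overline{\mathrm{CVaR}_\alpha}(y^{(1)},\dots,y^{(N)})$ of \eqref{eq:cvarbar}, which proves the claim.

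I expect the only genuine obstacle to be Step~1: getting the horizontal representation of the \emph{lower}-tail $\mathrm{CVaR}$ with the correct orientation, and arguing that the quantile-integral form is the right object to bound relative to Def.~\ref{def:cvar} (in particular, that a possible atom of $y$ at its $\mathrm{VaR}$ is absorbed by using that form rather than the conditional-expectation form). The DKW step and the order-statistics bookkeeping in Step~3 are then routine.
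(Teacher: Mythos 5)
Your proof is correct and follows essentially the same route as the source the paper cites for this result (\cite{thomas2019concentration}): the horizontal (quantile-integral) representation, the one-sided DKW--Massart bound with $\varepsilon=\sqrt{\ln(1/\delta)/(2N)}$, and the piecewise-constant evaluation of the integral against the empirical CDF reproduce Eq.~\eqref{eq:cvarbar} exactly; the paper itself gives no proof and simply imports the theorem. Your residual worry in Step~1 is harmless and resolves in the favorable direction: since $F_y^{-1}$ is nondecreasing, the quantile-integral form is always less than or equal to the conditional-expectation form of Def.~\ref{def:cvar} even when $y$ has an atom at its $\mathrm{VaR}$, so a high-probability lower bound on the former is a fortiori one on the latter.
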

This theorem states that $\overline{\mathrm{CVaR}}$ is a lower bound of the true \cvar with desired probability $1 - \delta$. We leverage this result to construct a safe set in belief space, reading
\begin{align}
    \mathcal{C}_b &= \left\{\bm{b} \in \mathcal{B} \mid h_b(\bm{b}) \geq 0\right\}, \text{where}\nonumber\\
    h_b(\bm{b}) &= \overline{\mathrm{CVaR}_{\alpha}}\left(h_x\left(\bm{x}^{(1)}\right),\dots,h_x\left(\bm{x}^{(N)}\right)\right).\label{eq:cvarbarrier}
\end{align}
In turn, if $\bm{b}(t) \in \mathcal{C}_b \implies \mathrm{Pr}\left[\mathrm{CVaR}(h_x(\bm{x}(t)) \geq 0\right] \geq 1-\delta$, we solve Problem~\ref{prob:control}. The only problem that remains is to design a controller that ensures that the belief state stays within $\mathcal{C}_b$ at all times.

\subsection{Control Synthesis}
Ultimately, we want to guarantee forward invariance of the safe set $\mathcal{C}_b$. 
Popular approaches to render sets forward invariant are CBFs and especially stochastic CBFs that have been designed for stochastic systems as in Eq.~\eqref{eq:BeliefDynamics}. However, the standard formulation of stochastic CBFs cannot be applied here since our safe set defined by Eq.~\eqref{eq:cvarbar} is a continuous but non-smooth function. Due to the sorting operation included in Eq.~\eqref{eq:cvarbar}, the gradient of \cvarbar~with respect to the belief state $\bm{b}$ can be discontinuous. 

This nonsmoothness of the safe set, however, does not affect the forward invariance property, see Thm.~\ref{thm:NSCBF}. Thus, to synthesize control inputs, we solve the quadratic program
\begin{equation}
\begin{aligned}
    \bm{u}^* = \argmin_{\bm{u} \in \mathcal{U}} \quad & \left(\bm{u} - \bm{u}_{\text{ref}}\right)^T \bm{Q}\left(\bm{u} - \bm{u}_{\text{ref}}\right)\\
    \textrm{s.t.~~} \quad & \frac{\partial B_b}{\partial \bm{b}} \left(\bm{f}_b\ofb + \bm{g}_b \ofb \bm{u}\right)\\
    &+ \frac{1}{2} \mathrm{tr}\left[\bm{\Sigma}^T \frac{\partial^2 B_b}{\partial \bm{b}^2} \bm{\Sigma}\right] \leq \gamma h_b\ofb.
\end{aligned}
\label{eq:QP}
\end{equation}
where $B_b\ofb = \nicefrac{1}{h_b\ofb}$, $\gamma \geq 0$, $\bm{Q}$ is a weighting matrix and $\bm{u}_{\text{ref}}$ is a reference control input. In order to provide guarantees for the forward invariance of $\mathcal{C}_b$ under the continuous-discrete evolution of the PF, we need to make the following assumption.
\begin{assumption}
    The belief state $\bm{b}$ does not leave the safe set under a discrete transition $\bm{b}(t_k^+) = \bm{\Delta}(\bm{b}(t_k^-), \bm{z}_k)$, i.e. $\bm{b}(t_k^-) \in \mathcal{C}_b \implies \bm{b}(t_k^+) \in \mathcal{C}_b$.\label{ass:Discrete}
\end{assumption}
\begin{figure*}[t]
\centering
\subfigure[Motivating Example]{
\includegraphics[scale=0.82]{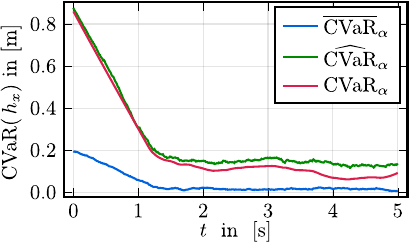}\label{fig:cvarcomparison}}
\subfigure[Safety under multi-modal Uncertainty]{
\includegraphics[scale=0.82]{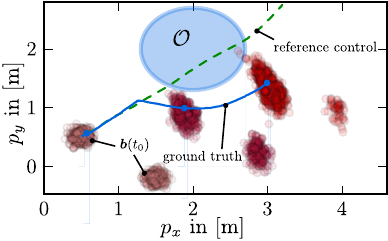}\label{fig:MultimodalSim}}
\subfigure[Hardware Experiments]{
\includegraphics[scale=0.82]{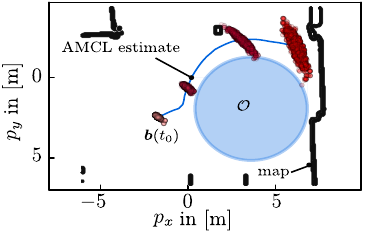}\label{fig:Experiments}}
\caption{\textbf{(a)} Values of the true \cvar, the empirical $\widehat{\mathrm{CVaR}}$ and our lower bound $\overline{\mathrm{CVaR}}$ over time for a single simulation of Example~\ref{exmpl:drone}. \textbf{(b)} Illustration of the PF belief over time for a 2D avoidance task where the initial belief follows a mixture of Gaussians. The ground truth trajectory is shown in blue and the trajectory for the reference controller is shown in green. \textbf{(c)} Illustration of the PF belief and the AMCL estimate for our hardware experiment.}
\vspace{-0.5cm}
\end{figure*}
Since measurements are stochastic and cannot be controlled, this assumption states that we do not receive extreme outlier measurements which make the PF diverge.\footnote{One example of a discrete transition leading to the belief leaving the safe set is when the worst state particle would get resampled $N$ times, meaning that the PF belief would collapse to a single particle. Although the probability of this event happening is non-zero, it is a highly unlikely event.} 
\begin{theorem}
    Under Assumption \ref{ass:Discrete}, if the controller in Eq.~\eqref{eq:QP} is feasible at all times with the nonsmooth stochastic CBF $h_b\ofb$ defined in Eq.~\eqref{eq:cvarbarrier}, then $\mathcal{C}_b$ is forward invariant.
\end{theorem}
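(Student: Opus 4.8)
The plan is to exploit the hybrid structure of the belief evolution. On each inter-measurement interval $[t_{k-1}, t_k)$ the belief state evolves continuously according to the SDE~\eqref{eq:BeliefDynamics}, and at each measurement instant $t_k$ it undergoes the discrete jump $\bm{b}(t_k^+) = \bm{\Delta}(\bm{b}(t_k^-), \bm{z}_k)$. I would therefore prove invariance of $\mathcal{C}_b$ separately for the continuous flow, using Theorem~\ref{thm:NSCBF}, and for the discrete jumps, using Assumption~\ref{ass:Discrete}, and then chain the two by induction over the measurement index $k$.

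For the continuous part, the first step is to note that~\eqref{eq:BeliefDynamics} is itself an SDE of the form~\eqref{eq:SDE} with state $\bm{b}$, drift $\bm{f}_b\ofb + \bm{g}_b\ofb\bm{u}$ and diffusion $\bm{\Sigma}$; since $\bm{\Sigma}$ is block diagonal with globally Lipschitz, non-degenerate blocks and the drift is locally Lipschitz (with jumps), the existence-uniqueness result invoked for~\eqref{eq:SDE} (from~\cite{leobacher2017strong}) carries over to the closed-loop belief SDE under a locally Lipschitz feedback $\bm{u}$. By construction, the inequality constraint in the QP~\eqref{eq:QP} is exactly the NSCBF condition of Definition~\ref{def:NSCBF} written for the belief system with barrier $h_b$. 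Under the hypothesis that~\eqref{eq:QP} is feasible for every $\bm{b}\in\mathcal{C}_b$ — so that its minimizer $\bm{u}^*$ is a locally Lipschitz feedback satisfying that inequality — $h_b$ is a valid NSCBF for~\eqref{eq:BeliefDynamics}. Theorem~\ref{thm:NSCBF}, applied to~\eqref{eq:BeliefDynamics}, then yields: whenever $\bm{b}(t_{k-1}^+)\in\mathcal{C}_b$, the closed-loop solution satisfies $\bm{b}(t)\in\mathcal{C}_b$ for all $t\in[t_{k-1}, t_k)$ with probability one, and in particular $\bm{b}(t_k^-)\in\mathcal{C}_b$ almost surely.

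The discrete part is immediate: Assumption~\ref{ass:Discrete} states exactly that $\bm{b}(t_k^-)\in\mathcal{C}_b \implies \bm{b}(t_k^+)\in\mathcal{C}_b$. The two pieces combine by induction on $k$. The base case holds by definition of forward invariance, which fixes an arbitrary $\bm{b}(t_0)\in\mathcal{C}_b$. For the inductive step, from $\bm{b}(t_{k-1}^+)\in\mathcal{C}_b$ almost surely the continuous-flow argument keeps the belief in $\mathcal{C}_b$ throughout $[t_{k-1}, t_k)$ and up to $t_k^-$, and Assumption~\ref{ass:Discrete} carries it across the jump so that $\bm{b}(t_k^+)\in\mathcal{C}_b$ almost surely. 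Since the intervals $[t_{k-1}, t_k)$ cover $[t_0,\infty)$ and a countable intersection of probability-one events has probability one, we obtain $\bm{b}(t)\in\mathcal{C}_b$ for all $t\geq t_0$ with probability one, i.e.\ $\mathcal{C}_b$ is forward invariant.

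The step I expect to be the main obstacle is the regularity needed to invoke Theorem~\ref{thm:NSCBF}, namely that the QP in~\eqref{eq:QP} returns a locally Lipschitz feedback. Because $\overline{\mathrm{CVaR}_{\alpha}}$ in~\eqref{eq:cvarbar} involves a sorting of the values $h_x(\bm{x}^{(i)})$, the quantities $\partial h_b/\partial\bm{b}$ and $\partial^2 h_b/\partial\bm{b}^2$ are only piecewise defined and jump across the measure-zero set where the ordering changes, so the QP minimizer need not be continuous there. The way I would handle this is to work on each region of fixed ordering (and fixed signs of the $(\cdot)^+$ terms), on which $h_b$ is a composition of $h_x$ with an affine map and the standard CBF-QP Lipschitz-selection arguments apply, and to rely on the nonsmooth stochastic CBF analysis behind Theorem~\ref{thm:NSCBF} (from~\cite{vahs2023nonsmooth}), which is built precisely to tolerate diffusion crossings of such nonsmoothness sets; I would also verify that the second-order term $\mathrm{tr}[\bm{\Sigma}^T(\partial^2 h_b/\partial\bm{b}^2)\bm{\Sigma}]$ is well-defined and bounded almost everywhere, which follows since it collapses to a weighted sum of the particle-wise terms $\mathrm{tr}[\bm{\sigma}^T(\partial^2 h_x/\partial\bm{x}^2)\bm{\sigma}]$ with bounded, piecewise-constant weights coming from $\overline{\mathrm{CVaR}_{\alpha}}$.
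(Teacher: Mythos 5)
Your proposal is correct and follows essentially the same route as the paper, whose proof is only a one-sentence sketch invoking Theorem~\ref{thm:NSCBF} with $\bm{x}$, $h_x$, and~\eqref{eq:SDE} replaced by $\bm{b}$, $h_b$, and~\eqref{eq:BeliefDynamics}, with Assumption~\ref{ass:Discrete} covering the measurement jumps. Your version simply makes explicit the induction over measurement instants, the countable intersection of probability-one events, and the Lipschitz-regularity caveat for the QP feedback that the paper leaves implicit.
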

\emph{Proof Sketch:} Forward invariance follows directly from Thm.~\ref{thm:NSCBF} in which we replace the state $\bm{x}$ by the belief state $\bm{b}$, the safe set $h_x$ by $h_b$ and the stochastic state dynamics in Eq.~\eqref{eq:SDE} by the stochastic belief dynamics in Eq.~\eqref{eq:BeliefDynamics}.

\medskip

Thus, the proposed controller solves Problem~\ref{prob:control} with desired confidence $1-\delta$ that can be chosen by the user.

\section{Experiments}
We evaluate our approach in simulations as well as hardware experiments in which uncertainties naturally occur. Specifically, we show
\begin{enumerate}
    \item correctness of the risk-aware guarantees in an example where the true posterior is known, in Sec.~\ref{sec:minimalexmpl},
    \item improved adherence to safety requirements over baselines under multi-modal distributions, in Sec.~\ref{sec:multimodal},
    \item integration with existing robot navigation stacks on hardware, in Sec.~\ref{sec:exp}.
\end{enumerate}

{\renewcommand{\arraystretch}{1.15}
\begin{table}[t]
\caption{Comparison of \cvar~measures with varying particles.}
\begin{center}
    \resizebox{0.47\textwidth}{!}{\begin{tabular}{l|ll|ll|ll}
    \multirow{2}{*}{$\bm{N}$} & \multicolumn{2}{c|}{$\bm{\widehat{\mathrm{CVaR}}_{\alpha}}$} & \multicolumn{2}{c|}{$\bm{\overline{\mathrm{CVaR}}_{\alpha}}$} & \multirow{2}{*}{$\bm{t_{c}}~\textbf{in [s]}$} \\ \cline{2-5}
                             & $e(\mu \pm \sigma)$ & $\mathrm{Pr}[e\leq0]$ &   $e(\mu \pm \sigma)$ & $\mathrm{Pr}[e\leq0]$ \\ \hline \hline
    $100$ & \begin{tabular}[c]{@{}c@{}}$-0.025 \pm$ \\ $0.008$\end{tabular} & 100 & \begin{tabular}[c]{@{}c@{}}$0.18 \pm$ \\ $0.022$\end{tabular} & 0.& \begin{tabular}[c]{@{}c@{}}$0.0005 \pm$ \\ $1.3 \cdot 10^{-5}$\end{tabular}\\
    \hline
    $1000$ & \begin{tabular}[c]{@{}c@{}}$-0.001 \pm$ \\ $0.004$\end{tabular}  & 69.5 & \begin{tabular}[c]{@{}c@{}}$0.025 \pm$ \\ $0.005$\end{tabular} & 0& \begin{tabular}[c]{@{}c@{}}$0.0006 \pm$ \\ $2.2 \cdot 10^{-5}$\end{tabular}\\
    \hline
    $5000$ & \begin{tabular}[c]{@{}c@{}}$-0.0004 \pm$ \\ $0.002$\end{tabular} & 53.7 & \begin{tabular}[c]{@{}c@{}}$ 0.015 \pm$ \\ $0.018$\end{tabular} & 0.& \begin{tabular}[c]{@{}c@{}}$0.001 \pm$ \\ $0.003$\end{tabular}
    \end{tabular}}
    \end{center}
    \label{tb:cvars}
    \vspace{-0.8cm}
\end{table}}
\subsection{Motivating Example Continued}
\label{sec:minimalexmpl}
\subsubsection{Setup}In this simulation we refer back to our motivating example shown in Fig.~\ref{fig:Example}. The drone's dynamics are described by a single integrator $\text{d}x = u ~\text{d}t + 0.1 ~\text{d}W$ with a reference controller as $u_{\text{ref}}=1$, i.e. a controller that would steer the drone into the wall. In this simulation we do not consider observations, so the uncertainty will increase over time. As the initial belief is Gaussian and the dynamics are linear, the true posterior is in this case given by a Kalman Filter (KF). We use the KF to calculate the mismatch of the \cvar~between the true posterior and the approximated PF belief. We evaluate the mismatch as $e=\mathrm{CVaR} - \phi$ where $\phi \in \{\widehat{\mathrm{CVaR}}, \overline{\mathrm{CVaR}}\}$. Note that $e <0$ means that \cvar~is overapproximated which should be avoided at all times. For this simulation we picked $\alpha = 0.2$ (thus we evaluate the mean of the worst 20 \% quantile) and $\delta = 0.05$ which means that our \cvarbar measure should be underapproximating in 95~\% of the cases.

\subsubsection{Discussion}Table~\ref{tb:cvars} shows the error $e$ for varying numbers of particles $N$. It can be seen that the empirical $\widehat{\mathrm{CVaR}}$ generally overapproximates the true \cvar~which stresses that empirical measures should not be used for safety-critical systems. Our \cvarbar~measure on the other hand is a lower bound in all cases which means that the bound in Thm.~\ref{thm:cvar} might not be very tight. However, the error $e$ decreases with the number of particles as the PF belief approaches the true belief. Fig.~\ref{fig:cvarcomparison} shows the evolution of the different \cvar~measures over time for $N=100$. It can be seen that our controller keeps $h_b\ofb \geq 0$, thus, rendering the set $\mathcal{C}_b$ forward invariant.

Additionally, Table~\ref{tb:cvars} shows the computation times for solving the QP in Eq.~\eqref{eq:QP}. Since we are only solving a QP over control inputs, the controller can still be run at a rate of almost 1kHz for a belief dimension of 5000 which shows that it is a real-time applicable approach.
\subsection{Safety under multi-modal Uncertainty}
\label{sec:multimodal}
\subsubsection{Setup}
We compare our proposed method to various baselines in an example including nonlinear dynamics and observations as well as a non-Gaussian initial belief. We consider a unicycle robot that uses noisy radio signals for localization which can be described by
\begin{equation}
\begin{aligned}
\begin{bmatrix}
    \text{d}{p}_x\\
    \text{d}{p}_y\\
    \text{d}{\varphi}
\end{bmatrix} = \begin{bmatrix}
    \mathrm{c}\varphi & 0\\
    \mathrm{c}\varphi & 0\\
    0 & \mathrm{s}\varphi
\end{bmatrix} \bm{u} + \bm{\sigma}~\mathrm{d}\bm{W},\hspace{0.2cm} \begin{array}{l}
    z_k = \lVert \bm{p} - \bm{\ell} \rVert_2 + v_k,\\
    v_k \sim \mathcal{N}\left(0, r)\right).
\end{array}
\end{aligned}
\label{eq:Unicycle}
\end{equation}
where $\mathrm{s}\varphi=\mathrm{sin}(\varphi)$, $\mathrm{c}\varphi=\mathrm{cos}(\varphi)$, $\bm{x}=[p_x, p_y, \varphi]^T$ is the state with $\bm{p}=[p_x, p_y]^T$ being the 2D position and $\bm{u}=[v, \omega]^T$ is the control input. The antenna from which we obtain noisy distance measurements $z$ is located at $\bm{\ell}=[4, 4]^T \text{~m}$. The motion noise is $\bm{\sigma}=\mathrm{diag}\left(\left[0.3, 0.3, 0.1\right]\right)$ and observation noise $r = 0.3$ with a sensor update rate of 1~Hz. The initial belief is given as a mixture of Gaussians $p(\bm{x}(t_0))=0.5 \cdot \mathcal{N}(\bm{p}_1, \bm{\Sigma}_1) + 0.5 \cdot \mathcal{N}(\bm{p}_2, \bm{\Sigma}_2)$ which is visualized in Fig.~\ref{fig:MultimodalSim}. A proportional controller that drives the robot to a goal based on the mean state of the PF belief serves as reference controller $\bm{u}_{\text{ref}}$, see Fig.~\ref{fig:MultimodalSim}.

A safe set which enables smooth navigation for unicycle dynamics with circular stay-out regions (from \cite{huang2023obstacle}) is defined as $h_x\ofx = \lVert \hat{\bm{p}} - \mathcal{O}\rVert_2 - (r_o + d)$ where
\begin{align}
    \hat{\bm{p}} = \begin{bmatrix}
        p_x + d \cdot \mathrm{cos}(\varphi)\\
        p_y + d \cdot \mathrm{sin}(\varphi)
    \end{bmatrix}
\end{align}
with $d>0$. This function is chosen because it allows to control both velocities, $v$ and $\omega$, simultaneously. For risk-aware control, we embed $h_x\ofx$ in our safe set defined over beliefs in Eq.~\eqref{eq:cvarbarrier} and solve the QP in Eq.~\eqref{eq:QP}.
\subsubsection{Baselines}
We compare our approach to three different baselines that are based on stochastic CBFs (SCBFs): 1.) A standard SCBF called $\mu$-SCBF that uses the mean state, i.e. the weighted sum of all particles. 2.) A standard SCBF called ML-SCBF based on the most likely particle, i.e. the particle with the highest observation likelihood. 3.) An SCBF that considers a known time-varying estimation error. Here we use Chebyshev inequality \cite{chen2007new} to obtain a ball $\mathcal{B}_{\eta}=\{\bm{x}\in \mathcal{X}|\lVert \bm{x}-\bm{\mu}\rVert_2^2 \leq \nicefrac{\mathrm{tr}[\bm{\Sigma}]}{\eta}\}$ which is centered around the mean $\bm{\mu}$ s.t. $\mathrm{Pr}[\bm{p} \in \mathcal{B}_{\eta}] \geq 1 - \eta$ with $\eta = 0.05$.
{\renewcommand{\arraystretch}{1.2}
\begin{table}[t]
    \caption{Comparison to baselines.}
     \resizebox{0.47\textwidth}{!}{\begin{tabular}{l | ccccc}
   ~ & $\bm{\mu}$\bf{-SCBF} & \bf{ML-SCBF} & \bf{BE-SCBF} & \begin{tabular}[c]{@{}c@{}}\bf{Ours} \\ ($\alpha = 0.2$)\end{tabular} & \begin{tabular}[c]{@{}c@{}}\bf{Ours} \\ ($\alpha = 0.05$)\end{tabular} \\
   \hline\hline
    \# coll.  &     46      &   42    &      0    &      2  & 0 \\
    \hline
    \begin{tabular}[c]{@{}c@{}}$h_x(\bm{x})$ \\ $(\mu \pm \sigma)$\end{tabular}   &      \begin{tabular}[c]{@{}c@{}}$0.85 \pm$ \\ $0.59$\end{tabular}    &   \begin{tabular}[c]{@{}c@{}}$0.8 \pm$ \\ 0.56\end{tabular}  &    \begin{tabular}[c]{@{}c@{}}$2.45 \pm$ \\ $0.49$\end{tabular}     &    \begin{tabular}[c]{@{}c@{}}$0.95 \pm$ \\ 0.46\end{tabular} & \begin{tabular}[c]{@{}c@{}}$1.71 \pm$ \\ 0.43\end{tabular}
\end{tabular}}
\label{tb:baselines}
\vspace{-0.5cm}
\end{table}}
\subsubsection{Discussion}
Figure \ref{fig:MultimodalSim} shows an examplary trajectory of the PF for the avoidance task using our proposed method. It can be clearly seen that our method corrects the reference controller such that the majority of particles avoid the area $\mathcal{O}$. In this simulation, we used $\mathrm{CVaR}_{0.2}$, i.e. we look at the mean of the worst 20 \% quantile. We used $N=1000$ particles and repeated the simulation 100 times. Table~\ref{tb:baselines} summarizes the results of our proposed method as well as the baselines. The two baselines $\mu$-SCBF and ML-SCBF fail to ensure safety in almost half of the cases which is due to the fact that they do not consider the entire distribution. On the other hand, BE-SCBF achieves zeros safety violations but is overly conservative as the distance to the boundary of the safe set ($h_x$) is more than twice as high as for our method. This is because reliable estimation errors are difficult to obtain, thus Chebyshev inequality is rather conservative if the distribution is not Gaussian. Our method ensures risk-aware safety where the level of riskiness can be defined by the user. Note that, since $\mathrm{CVaR}_{\alpha}(h_x) \leq \mathrm{VaR}_{\alpha}(h_x)$ it is also ensured that $\mathrm{CVaR}_{\alpha}(h_x) \geq 0 \implies \mathrm{Pr}[h_x \geq 0 ] \geq 1 - \alpha$ which is satisfied in our simulations.
\subsection{Hardware Experiments}
 We implemented our approach as a ROS2 node which subscribes to a PF belief as well as a reference control input and publishes a minimally deviating control input that ensures risk-aware safety.  The package is open-source for anyone who wishes to make it a part of their navigation.
\label{sec:exp}
\subsubsection{Setup}
In our hardware experiments, we used the ABB Mobile YuMi Research Platform in the WASP Research Arena for Robotics (WARA Robotics) at ABB, depicted in Fig.~\ref{fig:Intro}. The robot is running the ROS2 navigation stack (NAV2 \cite{macenski2020marathon}) that enables functionalities such as localization and planning. For state estimation, two LiDARs are used to localize the robot in a given map using Adaptive Monte Carlo Localization (AMCL) which is a PF approach with varying number of particles. The robots dynamics are modeled as
\begin{align*}
    \begin{bmatrix}
        \text{d}{p}_x\\
        \text{d}{p}_y\\
        \text{d}\varphi
    \end{bmatrix}&=\begin{bmatrix}
        \mathrm{c}\varphi & - \mathrm{s}\varphi & 0\\
        \mathrm{s}\varphi & \mathrm{c}\varphi & 0\\
        0 & 0 & 1
    \end{bmatrix}\bm{u}~\text{d}t +\bm{\sigma}~\text{d}\bm{W}
\end{align*}
where $\bm{u} = [v_x, v_y, \omega]^T$ are the control inputs consisting of a reference linear and angular velocity in the robot's local frame. The observation model is a likelihood field model which is the standard observation model for LiDAR localization in the NAV2 stack. Our node is subscribing to the AMCL PF belief as well as the velocity command provided by the local planner. In the AMCL node, the particle size varies between $3000-4000$ particles and our risk-aware controller publishes a control signal at a rate of 30 Hz.

We specify our safe set as a circular area, shown in Fig.~\ref{fig:Experiments}, that we want to avoid , i.e. $h_x(\bm{x}) = \lVert \bm{p} - \mathcal{O}\rVert_2 - r_o$. As we can directly control the velocity in $p_x$ and $p_y$ direction, the modification used in the previous section is not necessary.

\subsubsection{Discussion}We investigate a scenario in which a sensor failure happens, i.e. lidar measurements are not available and the robot purely relies on odometry information. Figure~\ref{fig:Experiments} shows the PF belief as well as the AMCL estimate over time. It can be seen that the majority of particles stays withing the safe area as enforced by our controller. Further, it is interesting to see that the distance of the AMCL estimate to $\mathcal{O}$ gradually increases since the uncertainty increases\footnote{video available at \href{https://www.youtube.com/watch?v=-YUdRtdFbYc}{https://www.youtube.com/watch?v=-YUdRtdFbYc}}. In this experiment, we cannot evaluate the actual distance to the safe set as ground truth data is not available.

\section{CONCLUSIONS AND FUTURE WORK}
Our work enables risk-aware safety for robots with nonlinear stochastic dynamics and observations by leveraging particle filters and control barrier functions. We define safe sets using Conditional-Value-at-Risk underapproximations that account for the mismatch between the approximated belief and the true unknown posterior which has not been considered before. We offer a computationally efficient control technique which we have analyzed in simulations as well as real world experiments, showcasing improved adherence to safety specifications over baselines. In future work, we aim to use our results for safe navigation in dynamic environments with probabilistic trajectory forecasting models.


\balance
\bibliographystyle{IEEEtran}
\bibliography{refs.bib}

\end{document}